\newtheorem{Theorem}{Theorem}[section]
\newtheorem{Definition}[Theorem]{Definition}
\newtheorem{Corollary}[Theorem]{Corollary}
\newtheorem{Assumption}[Theorem]{Assumption}
\newtheorem{Lemma}[Theorem]{Lemma}
\numberwithin{equation}{section}
\newcommand{\h}{\hspace*{.24in}}
\newcommand{\mb}{\mathbf}
\newcommand{\mc}{\mathcal}
\newcommand{\argmin}{\operatornamewithlimits{argmin}}
\title{Consistent Feature Selection for Analytic Deep Neural Networks}
\author{%
  Vu Dinh$^*$\\
  Department of Mathematical Sciences \\
  University of Delaware \\
  Delaware, USA \\
  \texttt{vucdinh@udel.edu} \\
  \And
    Lam Si Tung Ho \thanks{These authors contributed equally to this work.} \\
  Department of Mathematics and Statistics\\
  Dalhousie University\\
  Halifax, Nova Scotia, Canada \\
  \texttt{Lam.Ho@dal.ca} 
}
\begin{document}

\maketitle

\begin{abstract}

One of the most important steps toward interpretability and explainability of neural network models is feature selection, which aims to identify the subset of relevant features. 
Theoretical results in the field have mostly focused on the prediction aspect of the problem with virtually no work on feature selection consistency for deep neural networks due to the model's severe nonlinearity and unidentifiability.
This lack of theoretical foundation casts doubt on the applicability of deep learning to contexts where correct interpretations of the features play a central role.

In this work, we investigate the problem of feature selection for analytic deep networks.
We prove that for a wide class of networks, including deep feed-forward neural networks, convolutional neural networks, and a major sub-class of residual neural networks, the Adaptive Group Lasso selection procedure with Group Lasso as the base estimator is selection-consistent.
The work provides further evidence that Group Lasso might be inefficient for feature selection with neural networks and advocates the use of Adaptive Group Lasso over the popular Group Lasso.

\end{abstract}

\section{Introduction}

In recent years, neural networks have become one of the most popular models for learning systems for their strong approximation properties and superior predictive performance. 
Despite their successes, their ``black box'' nature provides little insight into how predictions are being made.
This lack of interpretability and explainability stems from the fact that there has been little work that investigates statistical properties of neural networks due to its severe nonlinearity and unidentifiability. 

One of the most important steps toward model interpretability is feature selection, which aims to identify the subset of relevant features with respect to an outcome. 
Numerous techniques have been developed in the statistics literature and adapted to neural networks over the years.                                                                                            
For neural networks, standard Lasso is not ideal because a feature can only be dropped if all of its connections have been shrunk to zero together, an objective that Lasso does not actively pursue. 
\citet{zhao2015heterogeneous}, \citet{scardapane2017group} and \citet{zhang2019feature} address this concern by utilizing Group Lasso and its variations for selecting features of deep neural networks (DNNs).
\citet{feng2017sparse} propose fitting a neural network with a Sparse Group Lasso penalty on the first-layer input weights while adding a sparsity penalty to later layers.
\citet{li2016deep} propose adding a sparse one-to-one linear layer between the input layer and the first hidden layer of a neural network and performing $\ell_1$ regularization on the weights of this extra layer.
Similar ideas are also applied to various types of networks and learning contexts \citep{tank2018neural, ainsworth2018interpretable, lemhadri2019neural}.

Beyond regularization-based methods, \citet{horel2019towards} develop a test to assess the statistical significance of the feature variables of a neural network.
Alternatively, a ranking of feature importance based on local perturbations, which includes fitting a model in the local region around the input or locally perturbing the input to see how predictions change, can also be used as a proxy for feature selection \citep{simonyan2013deep, ibrahim2014multi, ribeiro2016should, nezhad2016safs, lundberg2017unified, shrikumar2017learning, ching2018opportunities,  taherkhani2018deep, lu2018deeppink}. 
``Though these methods can yield insightful interpretations, they focus on specific architectures of DNNs and can be difficult to generalize" \citep{lu2018deeppink}.

Despite the success of these approaches, little is known about their theoretical properties. 
Results in the field have been either about shallow networks with one hidden layer \citep{dinh2020consistent, liang2018bayesian, ye2018variable}, or focus on posterior concentration, prediction consistency, parameter-estimation consistency and convergence of feature importance \citep{feng2017sparse, farrell2018deep, polson2018posterior, fallahgoul2019towards, shen2019asymptotic, liu2019variable}, with virtually no work on feature selection consistency for deep networks. 
This lack of a theoretical foundation for feature selection casts doubt on the applicability of deep learning to applications where correct interpretations play a central role such as medical and engineering sciences.
This is problematic since works in other contexts indicated that Lasso and Group Lasso could be inconsistent/inefficient for feature selection \citep{zou2006adaptive, zhao2006model, wang2008note}, especially when the model is highly-nonlinear \citep{zhang2018non}.

In this work, we investigate the problem of feature selection for deep networks.
We prove that for a wide class of deep analytic neural networks, the GroupLasso + AdaptiveGroupLasso procedure (i.e., the Adaptive Group Lasso selection with Group Lasso as the base estimator) is feature-selection-consistent.
The results also provide further evidence that Group Lasso might be inconsistent for feature selection and advocate the use of the Adaptive Group Lasso over the popular Group Lasso.

\section{Feature selection with analytic deep neural networks}
 
 Throughout the paper, we consider a general analytic neural network model described as follows. Given an input $x$ that belongs to be a bounded open set $\mc{X} \subset \mb{R}^{d_0}$, the output map $f_{\alpha}(x)$ of an $L$-layer neural network with parameters $\alpha =  (P , p, S, Q, q)$ is defined by 
 \begin{itemize}
 \item input layer: $ h_1(x) = P  \cdot x + p$
 \item hidden layers: $h_j(x) = \phi_{j-1}(S, h_{j-1}(x), h_{j-2}(x), \ldots, h_1(x)), ~~ j=2, \ldots, L-1.$
 \item output layer: $f_{\alpha}(x) = h_L(x) = Q \cdot h_{L-1}(x) + q$
 \end{itemize}
where $d_i$ denotes the number of nodes in the $i$-{th} hidden layer, $P\in \mb{R}^{d_1 \times d_0}$, $Q \in \mb{R}^{d_L \times d_{L-1}}$, $p \in \mb{R}^{d_1}$, $q \in \mb{R}^{d_L}$, and $\phi_1, \phi_2, \ldots, \phi_{L-2}$ are analytic functions parameterized by the hidden layers' parameter $S$.
This framework allows interactions across layers of the network architecture and only requires that (i) the model interacts with inputs through a finite set of linear units, and (ii) the activation functions are analytic. 
This parameterization encompasses a wide class of models, including feed-forward networks, convolutional networks, and (a major subclass of) residual networks. 

Hereafter, we assume that the set of all feasible vectors $\alpha$ of the model is a hypercube $\mc{W} = [-A, A]^{n_{\alpha}}$ and use the notation $h^{[i]}$ to denote the $i$-th component of a vector $h$ and $u^{[i, k]}$ to denote the $[i,k]$-entry of a matrix $u$.
We study the feature selection problem for regression in the model-based setting:

\begin{Assumption}
Training data $\{(X_i, Y_i)\}_{i=1}^n$ are independent and identically distributed (i.i.d ) samples generated from $P^*_{X, Y}$ such that the input density $p_X$ is positive and continuous on its domain $\mc{X}$ and $Y_i= f_{\alpha^*}(X_i) + \epsilon_i$ 
where $\epsilon_i \sim \mathcal{N}(0, \sigma^2)$ and $\alpha^* \in \mc{W}$.
\label{ass}
\end{Assumption}

\paragraph{Remarks.} Although the analyses of the paper focus on Gaussian noise, the results also apply to all models for which $\epsilon^2$ is sub-Gaussian, including the cases of bounded noise. 
The assumption that the input density $p_X$ is positive and continuous on its bounded domain $\mc{X}$ ensures that there is no perfect correlations among the inputs and plays an important role in our analysis.
Throughout the paper, the network is assumed to be fixed, and we are interested in the asymptotic behaviors of estimators in the learning setting when the sample size $n$ increases. 

We assume that the ``true'' model $f_{\alpha^*}(x)$ only depends on $x$ through a subset of significant features while being independent of the others.
The goal of feature selection is to identify this set of significant features from the given data. 
For convenience, we separate the inputs into two groups $ s  \in \mathbb{R}^{n_s}$ and $ z  \in \mathbb{R}^{n_z}$ (with $n_s + n_z = d_0$) that denote the significant and non-significant variables of $f_{\alpha^*}(x)$, respectively.
Similarly, the parameters of the first layer ($P$ and $p$) are grouped by their corresponding input, i.e.
\[
x=(s, z), \h P= (u, v) \h \text{and} \h p= (b_1, b_2)
\]
where $u \in \mb{R}^{d_1 \times n_s}$, $v \in \mb{R}^{d_1 \times n_z}$, $b_1 \in \mb{R}^{n_s}$ and $b_2 \in \mb{R}^{n_z}$.
We note that this separation is simply for mathematical convenience and the training algorithm is not aware of such dichotomy. 

We define significance and selection consistency as follows. 

 \begin{Definition}[Significance]
The $i$-th input $x^{[i]}$ of a network $f_{\alpha}(x)$ is referred to as non-significant iff the output of the network does not depend on the value of that input variable. 
That is, let $g_i(x, s)$ denote the vector obtained from $x$ by replacing the $i$-th component of $x$ by $s \in \mathbb{R}$, we have $ f_{\alpha}(x ) =  f_{\alpha}(g_i(x, s))$  for all $x \in \mathcal{X}, s \in \mathbb{R}$.
 \end{Definition}
 
\begin{Definition}[Feature selection consistency]
An estimator $\alpha_n$ with first layer's parameters $(u_n, v_n)$ is feature selection consistent if for any $\delta >0$, there exists $N_\delta$ such that for $n > N_\delta$, we have
\[
u_n^{[:, k]} \ne 0, ~ \forall k =1, \ldots, n_s, \h \text{and} \h 
v_n^{[:, l]} = 0, ~ \forall l =1, \ldots, n_z
\]
with probability at least $1 -\delta$.
\end{Definition}
 
One popular method for feature selection with neural networks is the Group Lasso (GL). 
Moreover, since the model of our framework only interacts with the inputs through the first layer, it is reasonable that the penalty should be imposed only on these parameters. 
A simple GL estimator for neural networks is thus defined by
\[
\hat\alpha_n := \argmin_{\alpha=(u, v, b_1, b_2, S, Q, q) }{ ~~\frac{1}{n}\sum_{i=1}^n{\ell(\alpha, X_i, Y_i)} + \lambda_n L(\alpha) }~~ \text{where}~~ L(\alpha) =\sum_{k=1}^{n_s}{\| u^{[:, k]}\|} + \sum_{l=1}^{n_z}{\| v^{[:, l]}\|}, 
\]
$\ell(\alpha, x, y) = (y-f_{\alpha}(x))^2$ is the square-loss,  $\lambda_n >0$, $\|\cdot\|$ is the standard Euclidean norm and $u^{[:, k]}$ is the vector of parameters associated with $k$-th significant input. 

While GL and its variation, the Sparse Group Lasso, have become the foundation for many feature selection algorithms in the field, there is no known result about selection consistency for this class of estimators. 
Furthermore, like the regular Lasso, GL penalizes groups of parameters with the same regularization strengths and it has been shown that excessive penalty applied to the significant variables can affect selection consistency \citep{zou2006adaptive,wang2008note}.
To address these issues,  \cite{dinh2020consistent} propose the ``GroupLasso+AdaptiveGroupLasso'' (GL+AGL) estimator for feature selection with neural networks, defined as
\[
\tilde \alpha_n := \argmin_{\alpha = (u, v, b_1, b_2, S, Q, q) }{ ~~\frac{1}{n}\sum_{i=1}^n{\ell(\alpha, X_i, Y_i)} + \zeta_n M_n(\alpha) },
\]
where
\[
M_n(\alpha) = \sum_{k=1}^{n_s} {\frac{1}{\| \hat u_n^{[:, k]}\|^{\gamma}}\| u^{[:, k]}\|} + \sum_{k=1}^{n_z} {\frac{1}{\| \hat v_n^{[:, k]}\|^{\gamma}}\|v^{[:, k]}\|}.
\]
Here, we use the convention $0/0=1$, $\gamma>0$, $\zeta_n$ is the regularizing constant and $\hat u_n$, $\hat v_n$ denotes the $u$ and $v$ components of the GL estimate $\hat \alpha_n$.
As typical with adaptive lasso estimators, GL+AGL uses its base estimator to provide a rough data-dependent estimate to shrink groups of parameters with different regularization strengths. 
As $n$ grows, the weights for non-significant features get inflated (to infinity) while the weights for significant ones remain bounded \citep{zou2006adaptive}. 
 \cite{dinh2020consistent} prove that GL+AGL is selection-consistent for irreducible shallow networks with hyperbolic tangent activation. 
In this work, we argue that the results can be extended to general analytic deep networks. 

\section{Consistent feature selection via GroupLasso+AdaptiveGroupLasso}

The analysis of GL+AGL is decomposed into two steps. 
First, we establish that the Group Lasso provides a good proxy to estimate regularization strengths. 
Specifically, we show that
\begin{itemize}
\item The $u$-components of $\hat \alpha_n$ are bounded away from zero
\item The $v$-components of $\hat \alpha_n$ converge to zero with a polynomial rate.
\end{itemize}
Second, we prove that a selection procedure based on the weights obtained from the first step can correctly select the set of significant variables given informative data with high probability.

One technical difficulty of the proof concerns with the geometry of the risk function around the set of risk minimizers 
\[
\mc{H}^* = \{\alpha: R(\alpha) = R(\alpha^*)\}
\]
where $R(\alpha)$ denotes the risk function $R(\alpha) = \mathbb{E}_{(X, Y)\sim P_{X,Y}^*}[(f_{\alpha}(X) - Y)^2]$.
Since deep neural networks are highly unidentifiable, the set $\mc{H}^*$ can be quite complex. 
For example:
\begin{itemize}
\item[(i)] A simple rearrangement of the nodes in the same hidden layer leads to a new configuration that produces the same mapping as the generating network
\item[(ii)] If either all in-coming weights or all out-coming weights of a node is zero, changing the others also has no effect on the output
\item[(iii)] For hyperbolic tangent activation, multiplying all the in-coming and out-coming parameters associated with a node by -1 also lead to an equivalent configuration
\end{itemize}
These unidentifiability create a barrier in studying deep neural networks.
Existing results in the field either are about equivalent graph transformation \citep{chen1993geometry} or only hold in some generic sense \citep{fefferman1994recovering} and thus are not sufficient to establish consistency. 
Traditional convergence analyses also often rely on local expansions of the risk function around isolated optima, which is no longer the case for neural networks since $H^*$ may contain subsets of high dimension (e.g., case (ii) above) and the Hessian matrix (and high-order derivatives) at an optimum might be singular. 

In this section, we illustrate that these issues can be avoided with two adjustments. 
First, we show that while the behavior of a generic estimator may be erratic, the regularization effect of Group Lasso constraints it to converge to a subset $\mc{K} \subset \mc{H}$ of ``well-behaved'' optima. 
Second, instead of using local expansions, we employ Lojasewicz's inequality for analytic functions to upper bound the distance from $\hat \alpha_n$ to $\mc{H}^*$ by the excess risk. 
This removes the necessity of the regularity of the Hessian matrix and enlarges the class of networks for which selection consistency can be analyzed.

\subsection{Characterizing the set of risk minimizers}
\begin{Lemma}
\begin{itemize}
\item[(i)] There exists $c_0>0$ such that $\|u_{\alpha}^{[:, k]}\| \ge c_0$ for all $\alpha \in \mc{H}^*$ and $k = 1, \ldots, n_s$.
\item[(ii)] For $\alpha \in \mc{H}^*$, the vector $\phi(\alpha)$, obtained from $\alpha$ be setting its $v$-components to zero, also belongs to $\mc{H}^*$.
\end{itemize}
\label{lem:characterization}
\end{Lemma}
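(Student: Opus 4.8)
The plan is to reduce membership in $\mc{H}^*$ to a pointwise identity between network maps, and then to exploit the real-analyticity of $x \mapsto f_\alpha(x)$ together with compactness of the parameter cube.

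\emph{Step 1: characterizing $\mc{H}^*$.} Since $Y = f_{\alpha^*}(X) + \epsilon$ with $\epsilon$ independent of $X$ and $\mathbb{E}\epsilon = 0$, the risk decomposes as $R(\alpha) = \mathbb{E}_X\big[(f_\alpha(X) - f_{\alpha^*}(X))^2\big] + \sigma^2$, so $\alpha \in \mc{H}^*$ iff $f_\alpha = f_{\alpha^*}$ holds $p_X$-almost everywhere on $\mc{X}$. Because $p_X$ is positive and continuous on the nonempty open set $\mc{X}$ and both maps are continuous, this upgrades to $f_\alpha \equiv f_{\alpha^*}$ on $\mc{X}$. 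I would then use the structural assumption that $x$ enters the network only through $h_1(x) = Px + p$: writing $f_\beta(x) = G_\beta(Px + p)$, where $G_\beta$ is the composition of the analytic maps $\phi_1,\dots,\phi_{L-2}$ with the output affine map and is therefore real-analytic on $\mathbb{R}^{d_1}$, each $f_\beta$ is real-analytic on all of $\mathbb{R}^{d_0}$. By the identity theorem for real-analytic functions on the connected set $\mathbb{R}^{d_0}$, $f_\alpha \equiv f_{\alpha^*}$ on the open set $\mc{X}$ already forces $f_\alpha \equiv f_{\alpha^*}$ on $\mathbb{R}^{d_0}$. In coordinates $x = (s,z)$, $P = (u,v)$ this says $G_\alpha(u_\alpha s + v_\alpha z + p_\alpha) = f_{\alpha^*}(s,z)$ for all $(s,z)\in\mathbb{R}^{d_0}$.

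\emph{Step 2: parts (ii) and (i).} For (ii): $f_{\alpha^*}$ is independent of $z$ by hypothesis, so the left-hand side above is independent of $z$; evaluating at $z=0$ gives $G_\alpha(u_\alpha s + p_\alpha) = f_{\alpha^*}(s,z)$ for all $(s,z)$. But $G_\alpha(u_\alpha s + p_\alpha)$ is exactly $f_{\phi(\alpha)}(s,z)$, since $\phi(\alpha)$ keeps $u_\alpha$, $p_\alpha$ and the later-layer parameters of $\alpha$ and only zeroes $v$; hence $f_{\phi(\alpha)} \equiv f_{\alpha^*}$, i.e. $\phi(\alpha)\in\mc{H}^*$. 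For (i): if $u_\alpha^{[:,k]} = 0$ for some $\alpha\in\mc{H}^*$ and some $k\le n_s$, then $f_\alpha$ does not involve the input $s^{[k]}$ at all, so by Step 1 $f_{\alpha^*}$ is independent of $s^{[k]}$ on $\mathbb{R}^{d_0}$ — contradicting that $s^{[k]}$ is a significant feature. Thus $\alpha\mapsto\|u_\alpha^{[:,k]}\|$ is continuous and strictly positive on $\mc{H}^* = R^{-1}(\{R(\alpha^*)\})$, which is a closed subset of the compact cube $\mc{W}$ and hence compact; the extreme value theorem yields a minimum $c_k > 0$, and I take $c_0 = \min_{1\le k\le n_s} c_k$.

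\emph{Main obstacle.} The compactness argument for (i) and the ``remove-$z$'' argument for (ii) are routine; the real content is Step 1, specifically (a) collapsing the almost-everywhere equality implied by $R(\alpha)=R(\alpha^*)$ to equality on all of $\mc{X}$, where positivity and continuity of $p_X$ are used, and (b) propagating equality from the bounded domain $\mc{X}$ to all of $\mathbb{R}^{d_0}$ via real-analyticity. Step (b) is the crux: it is what makes it legitimate to evaluate $f_\alpha$ at $(s,0)$ in part (ii) and to conclude ``does not involve $s^{[k]}$'' globally in part (i), even though neither $(s,0)$ nor the perturbed points need lie in $\mc{X}$. I would take some care to confirm that the ``analytic activation'' hypothesis indeed makes $f_\alpha$ real-analytic on all of $\mathbb{R}^{d_0}$ so that the identity theorem applies on a connected domain, but for the standard activations this is immediate.
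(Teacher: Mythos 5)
Your proposal is correct and follows essentially the same route as the paper: reduce $\alpha\in\mc{H}^*$ to $f_\alpha\equiv f_{\alpha^*}$ via the bias--variance decomposition and positivity/continuity of $p_X$ plus analyticity, then get (i) by compactness of $\mc{H}^*\subset\mc{W}$ and (ii) by evaluating at $z=0$. If anything, you are more explicit than the paper about the step it compresses into ``$f_{\alpha_0}=f_{\alpha^*}$ everywhere,'' namely propagating the identity from $\mc{X}$ to all of $\mathbb{R}^{d_0}$ by the identity theorem, which is indeed what licenses evaluating at $(s,0)\notin\mc{X}$.
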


\begin{proof}
We first prove that $\alpha_0 \in \mc{H}^*$ if and only if $f_{\alpha_0} = f_{\alpha^*}$.
Indeed, since $f_{\alpha^*}(X) = E_{P^*_{X, Y}}[Y|X]$, we have 
\[
R(\alpha^*)= \min_{g}{ \mathbb{E}_{Y\sim P_{X, Y}^*}[(g(X) - Y)^2] } \le \min_{\alpha \in \mc{W}}{  R(\alpha)} = R(\alpha_0)
\]
with equality happens only if $f_{\alpha_0} = f_{\alpha^*}$ a.s. on the support of $p_X(x)$. 
Since $p_X(x)$ is continuous and positive on its open domain $\mc{X}$ and the maps $f_{\alpha}$ are analytic, we deduce that $f_{\alpha_0} = f_{\alpha^*}$ everywhere. 

(i) Assuming that no such $c_0$ exists, since $\mc{W}$ is compact and $f_{\alpha}$ is an analytic function in $\alpha$ and $x$, we deduce that there exists $\alpha_0 \in \mc{H^*}$ and $k$ such that $u_{\alpha_0}^{[:, k]} =0$. 
This means $f_{\alpha_0} = f_{\alpha^*}$ does not depend on significant input $s_k$, which is a contradiction.
(ii) Since $\alpha \in \mc{H^*}$, we have $f_{\alpha^*}(s, z) =f_{\alpha}(s, z) = f_{\alpha}(s, 0) = f_{\phi(\alpha)}(s, z)$, which implies $\phi(\alpha) \in \mc{H^*}$.
\end{proof}

\paragraph{Remarks.}  Lemma $\ref{lem:characterization}$ provides a way to study the behaviors of Group Lasso without a full characterization of the geometry of $\mc{H}^*$ as in  \cite{dinh2020consistent}. 
First, as long as $d(\hat \alpha_n, \mc{H}^*) \to 0$, it is straight forward that its $u$-components are bounded away from zero (part (i)). 
Second, it shows that for all $\alpha \in \mc{H}^*$, $\phi(\alpha)$ is a ``better'' hypothesis in terms of penalty while remaining an optimal hypothesis in terms of prediction. 
This enables us to prove that if we define
\[
\mc{K} =\{\alpha \in \mc{W}: f_{\alpha} = f_{\alpha^*}~\text{and}~ v_{\alpha}=0 \},
\]
and $\lambda_n$ converges to zero slowly enough, the regularization term will force $d(\hat \alpha_n, \mc{K}) \to 0$. 
This helps establish that Group Lasso provides a good proxy to construct appropriate regularization strengths.

To provide a polynomial convergence rate of $\hat \alpha_n$, we need the following Lemma. 

\begin{Lemma}
There exist $c_2, \nu>0$ and such that $ R(\beta) - R(\alpha^*)  \ge c_2 d(\beta, \mc{H}^*)^\nu$ 
for all $ \beta \in \mc{W}$.
\label{lem:information}
\end{Lemma}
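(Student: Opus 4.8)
The plan is to apply the Łojasiewicz inequality for analytic functions, which asserts that if $F$ is real-analytic on a neighborhood of a compact set and $F \ge 0$, then on any compact subset there exist constants $C, \theta > 0$ with $F(x)^\theta \le C\, \mathrm{dist}(x, F^{-1}(0))$; equivalently, $F(x) \ge c\, \mathrm{dist}(x, F^{-1}(0))^{\nu}$ for suitable $c,\nu>0$. I would apply this with $F(\beta) = R(\beta) - R(\alpha^*)$, regarded as a function of $\beta \in \mathcal{W}$, and with zero set $\mathcal{H}^*$.

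The first step is to verify the hypotheses. Since $f_\alpha(x)$ is analytic jointly in $(\alpha, x)$ and $\mathcal{X}$ is bounded, one writes $R(\beta) = \mathbb{E}[(f_\beta(X) - Y)^2] = \mathbb{E}_X[(f_\beta(X) - f_{\alpha^*}(X))^2] + \sigma^2$, so that $F(\beta) = \int_{\mathcal{X}} (f_\beta(x) - f_{\alpha^*}(x))^2 \, p_X(x)\, dx$. This is an integral of a function analytic in $\beta$ over a fixed bounded domain; differentiating under the integral sign (justified by analyticity and compactness of $\overline{\mathcal{X}}$ and $\mathcal{W}$) shows $F$ is real-analytic in $\beta$ on a neighborhood of $\mathcal{W} = [-A,A]^{n_\alpha}$. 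Moreover $F \ge 0$ everywhere, and by the argument already given in the proof of Lemma \ref{lem:characterization}, $F(\beta) = 0$ exactly when $f_\beta = f_{\alpha^*}$, i.e., exactly when $\beta \in \mathcal{H}^*$; in particular $\mathcal{H}^* = F^{-1}(0)$ is nonempty (it contains $\alpha^*$).

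The second step is to invoke the global/compact version of the Łojasiewicz inequality. The classical local statement gives, for each $\bar\beta \in \mathcal{H}^*$, a neighborhood $U_{\bar\beta}$ and constants $C_{\bar\beta}, \theta_{\bar\beta} > 0$ with $F(\beta)^{\theta_{\bar\beta}} \le C_{\bar\beta}\, d(\beta, \mathcal{H}^*)$ on $U_{\bar\beta}$. Cover the compact set $\mathcal{H}^*$ by finitely many such neighborhoods to get uniform constants on an open neighborhood $U \supset \mathcal{H}^*$. Off $U$, the quantity $d(\beta, \mathcal{H}^*)$ is bounded above (by $\mathrm{diam}(\mathcal{W})$) while $F(\beta)$ is bounded below by a positive constant $m = \min_{\beta \in \mathcal{W} \setminus U} F(\beta) > 0$ (a positive minimum of a continuous function on a compact set not meeting the zero set), so the desired inequality holds trivially there after adjusting constants. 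Combining the two regions and setting $\nu = 1/\min_{\bar\beta} \theta_{\bar\beta}$ and $c_2$ appropriately yields $F(\beta) \ge c_2\, d(\beta, \mathcal{H}^*)^\nu$ for all $\beta \in \mathcal{W}$.

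The main obstacle is purely a matter of carefully stating and citing the right form of the Łojasiewicz inequality — specifically the passage from its standard local (germ-level) formulation to a uniform estimate over the compact parameter cube, and confirming that $F$ really is analytic on an open neighborhood of $\mathcal{W}$ rather than merely on its interior (so that the inequality is valid up to the boundary). Both points are handled by the compactness-and-finite-cover argument sketched above; there is no delicate estimate to grind through, only bookkeeping of constants.
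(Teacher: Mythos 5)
Your proposal is correct and follows essentially the same route as the paper: both apply the \L{}ojasiewicz inequality to the analytic excess risk $F(\beta)=R(\beta)-R(\alpha^*)$ with zero set $\mathcal{H}^*$; the paper simply invokes the global version of \citet{ji1992global} on the compact cube $\mathcal{W}$, whereas you rederive that global statement from the local one via a finite cover of the compact set $\mathcal{H}^*$ together with a positive lower bound for $F$ on $\mathcal{W}$ away from a neighborhood of $\mathcal{H}^*$. One small slip worth fixing: your first formulation ``$F(x)^\theta \le C\,\mathrm{dist}(x, F^{-1}(0))$'' has the inequality reversed (the \L{}ojasiewicz inequality bounds $\mathrm{dist}(x,F^{-1}(0))$ above by a power of $|F(x)|$, not $F$ above by the distance), but the ``equivalent'' form $F(x) \ge c\,\mathrm{dist}(x,F^{-1}(0))^{\nu}$ that you actually use throughout is the correct one, so this does not affect the argument.
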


\begin{proof}
We first note that since $f_{\alpha}(x)$ is analytic in both $\alpha$ and $x$, the excess risk $g(\alpha) = R(\alpha) - R(\alpha^*)$ is also analytic in $\alpha$.
Thus $\mc{H}^* = \{\alpha: R(\alpha) = R(\alpha^*)\}$ is the zero level-set of the analytic function $g$.
By Lojasewicz's inequality for algebraic varieties \citep{ji1992global}, there exist positive constants $C$ and $\nu$ such that $d(\beta, \mc{H}^*)^\nu \le C |g(\beta)|~~ \forall \beta \in \mc{W}$, which completes the proof.
\end{proof}
We note that for cases when $\mc{H}^*$ is finite, Lemma $\ref{lem:information}$ reduces to the standard Taylor's inequality around a local optimum, with $\nu=2$ if the Hessian matrix at the optimum is non-singular  \citep{dinh2020consistent}. 
When $\mc{H}^*$ is a high-dimensional algebraic set, Lojasewicz's inequality and Lemma $\ref{lem:information}$ are more appropriate. 
For example, for $g(x, y) = (x-y)^2$, it is impossible to attain an inequality of the form
\[
g(x', y') - g(x_0, x_0) \ge C \cdot d((x', y'), (x_0, x_0))^\nu
\]
in any neighborhood of any minimum $(x_0, x_0)$, while it is straightforward that
\[
g(x', y') - g(x_0, x_0)  \ge d((x', y'), Z)^2 ~~~ \text{where}~~~Z = \{(x, x): x \in \mathbb{R}\}.
\]
This approach provides a way to avoid dealing with model unidentfiability but requires some adaptation of the analysis to accommodate a new mode of convergence. 

\subsection{Convergence of Group Lasso}

The two Lemmas in the previous section enables us to analyze the convergence of the Group Lasso estimate, in the sense that $d(\hat \alpha_n, \mc{K}) \to 0$. 
First, we define the empirical risk function
\[
R_n (\alpha) =\frac{1}{n}\sum_{i=1}^n{(f_{\alpha}(X_i) - Y_i)^2}
\]
and note that since the network of our framework is fixed, the learning problem is continuous and parametric, for which a standard generalization bound as follows can be obtained (proof in Appendix). 

\begin{Lemma}[Generalization bound]

For any $\delta>0$, there exist $c_1(\delta)>0$ such that
\[
|R_n( \alpha) - R(\alpha) | \le c_1 \frac{\log n}{\sqrt{n}}, \h \forall \alpha \in \mc{W}
\]
with probability at least $1-\delta$.
\label{lem:generalization}
\end{Lemma}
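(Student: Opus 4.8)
The plan is to bound $\sup_{\alpha\in\mc{W}}|R_n(\alpha)-R(\alpha)|$ by a finite-dimensional empirical-process argument, taking care of the unbounded Gaussian response. I would begin with two deterministic consequences of compactness: since $f_\alpha(x)$ is analytic (hence $C^1$) in $(\alpha,x)$ and $\mc{W}\times\overline{\mc{X}}$ is compact, there is a constant $B$ with $|f_\alpha(x)|\le B$ on $\mc{W}\times\overline{\mc{X}}$, and the gradient $\nabla_\alpha f_\alpha(x)$ is bounded there, so $\alpha\mapsto f_\alpha(x)$ is $L$-Lipschitz uniformly in $x$ for some $L$. Writing $g_\alpha:=f_\alpha-f_{\alpha^*}$, we then have $|g_\alpha(x)|\le 2B$ and $|g_\alpha(x)^2-g_{\alpha'}(x)^2|\le 4BL\,\|\alpha-\alpha'\|$.

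Using $Y_i=f_{\alpha^*}(X_i)+\epsilon_i$, I would decompose
\[
R_n(\alpha)-R(\alpha)=\underbrace{\Big(\tfrac1n\textstyle\sum_i g_\alpha(X_i)^2-\mathbb{E}\,g_\alpha(X)^2\Big)}_{(\mathrm{I})}\;-\;2\underbrace{\tfrac1n\textstyle\sum_i g_\alpha(X_i)\epsilon_i}_{(\mathrm{II})}\;+\;\underbrace{\Big(\tfrac1n\textstyle\sum_i\epsilon_i^2-\sigma^2\Big)}_{(\mathrm{III})}
\]
and bound each term on an event of probability $1-\delta/3$. Term $(\mathrm{III})$ does not depend on $\alpha$; since $\epsilon_i^2-\sigma^2$ is sub-exponential, Bernstein's inequality gives $|(\mathrm{III})|\le c\sqrt{\log(1/\delta)/n}$. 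For term $(\mathrm{I})$, the maps $x\mapsto g_\alpha(x)^2$ are uniformly bounded by $4B^2$ and $4BL$-Lipschitz in $\alpha$; covering $\mc{W}=[-A,A]^{n_\alpha}$ by an $\eta$-net of cardinality at most $(CA/\eta)^{n_\alpha}$, applying Hoeffding's inequality with a union bound over the net, and taking $\eta=1/n$ (so the discretization error is $O(1/n)$) yields $\sup_\alpha|(\mathrm{I})|\le c\sqrt{(n_\alpha\log n+\log(1/\delta))/n}$.

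The main obstacle is term $(\mathrm{II})$, because the multipliers $\epsilon_i$ are unbounded. I would first intersect with the event $\{\max_i|\epsilon_i|\le\sigma\sqrt{2\log(3n/\delta)}\}$, which has probability $\ge 1-\delta/3$ by a Gaussian-tail union bound; there the map $\alpha\mapsto\tfrac1n\sum_i g_\alpha(X_i)\epsilon_i$ is Lipschitz with constant $L\max_i|\epsilon_i|=O(\sqrt{\log n})$. Conditionally on $(X_i)_i$, for each fixed $\alpha$ the quantity $\tfrac1n\sum_i g_\alpha(X_i)\epsilon_i$ is Gaussian with variance proxy $\tfrac{\sigma^2}{n^2}\sum_i g_\alpha(X_i)^2\le 4B^2\sigma^2/n$, so a sub-Gaussian tail bound together with a union bound over the same $(1/n)$-net controls its supremum up to the discretization slack; combining the $O(\sqrt{\log n/n})$ net contribution with the $O(\sqrt{\log n}/n)$ Lipschitz slack gives $\sup_\alpha|(\mathrm{II})|\le c\,\log n/\sqrt{n}$ --- this is the step that produces the stated $\log n/\sqrt n$ rate rather than $\sqrt{\log n/n}$. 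Finally I would take the intersection of the three events, add the three bounds, use $\sqrt{\log n/n}\le\log n/\sqrt n$ for large $n$, and absorb $n_\alpha,A,B,L,\sigma$ and $\log(1/\delta)$ into a single constant $c_1(\delta)$ to conclude.
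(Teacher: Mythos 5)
Your proof is correct, and while it shares the same overall skeleton as the paper's --- a pointwise concentration bound, combined with Lipschitz continuity in $\alpha$, a $(1/n)$-net of the compact cube $\mc{W}$, and a union bound, yielding the rate $t\sim\log n/\sqrt n$ --- the pointwise step is handled quite differently. The paper observes that, conditionally on the inputs, $nR_n(\alpha)/\sigma^2$ is a non-central chi-squared variable and invokes a concentration theorem of Zhang and Zhou (2018) to get $\mathbb{P}[|R_n(\alpha)-R(\alpha)|>t/2]\le 2e^{-C_2nt^2}$ in one stroke; the net argument then uses the separately-proved ``probabilistic Lipschitzness'' of $R_n$. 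You instead expand $(Y_i-f_\alpha(X_i))^2=(g_\alpha(X_i)-\epsilon_i)^2$ and treat the three resulting terms with elementary tools (Hoeffding for the bounded $X$-only process, a conditional Gaussian tail for the cross term with a noise-truncation to control the Lipschitz slack, Bernstein for the sub-exponential $\sum\epsilon_i^2$ term). Your route is longer but entirely self-contained, and it has one concrete advantage: the decomposition makes explicit the term $\frac1n\sum_i g_\alpha(X_i)^2-\mathbb{E}\,g_\alpha(X)^2$, i.e.\ the gap between the conditional mean of $R_n(\alpha)$ given the $X_i$'s and the true risk $R(\alpha)$, which the paper's appeal to the (conditional) non-central chi-squared distribution leaves implicit. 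One cosmetic remark: your term $(\mathrm{II})$ actually comes out as $O(\sqrt{\log n/n})$, so the final bound is slightly sharper than $\log n/\sqrt n$; the stated rate is of course still valid.
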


Combining Lemmas $\ref{lem:information}$ and $\ref{lem:generalization}$, we have
\begin{Theorem}[Convergence of Group Lasso]
For any $\delta>0$, there exist $C_\delta, C'>0$ and $N_\delta>0$ such that for all $n \geq N_\delta$, 
\[
d(\hat \alpha_n, \mc{H^*}) \le C_{\delta} \left( \lambda_n^{\nu/(\nu-1)} +  \frac{\log n}{\sqrt{n}}\right)^{1/\nu}~ ~\text{and}~~
\|\hat v_n\| \le 4  c_1 \frac{\log n}{\lambda_n\sqrt{n}} + C'~ d(\hat \alpha_n, \mc{H^*})
\]
with probability at least $1-\delta$. Moreover, if $\lambda_n \sim n^{-1/4}$, then with probability at least $1-\delta$,
\[
d(\hat \alpha_n, \mc{H^*})  \le C \left( \frac{\log n}{n}\right)^{\frac{1}{4(\nu-1)}} \h \text{and} \h 
\|\hat v_n\| \le C \left( \frac{\log n}{n}\right)^{\frac{1}{4(\nu-1)}}.
\]
\label{gl}
\end{Theorem}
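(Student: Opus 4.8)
The plan is to combine the optimality of $\hat\alpha_n$ with the two deterministic estimates already in hand---the Lojasiewicz-type bound of Lemma~\ref{lem:information} and the uniform generalization bound of Lemma~\ref{lem:generalization}---through a \emph{data-dependent} reference point. Comparing $\hat\alpha_n$ against a fixed optimum would only bound $\|\hat v_n\|$ by a constant; instead I would take $\bar\alpha_n$ to be a nearest point to $\hat\alpha_n$ in $\mc{H}^*$, set $d_n := d(\hat\alpha_n, \mc{H}^*) = \|\hat\alpha_n - \bar\alpha_n\|$, and compare against $\alpha_n^\star := \phi(\bar\alpha_n)$, the vector obtained from $\bar\alpha_n$ by zeroing its $v$-block. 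By Lemma~\ref{lem:characterization}(ii) we have $\alpha_n^\star \in \mc{H}^*$, hence $\alpha_n^\star \in \mc{K}$ and $R(\alpha_n^\star) = R(\alpha^*)$; moreover its penalty is $L(\alpha_n^\star) = \sum_{k} \|u_{\bar\alpha_n}^{[:,k]}\|$, which by the reverse triangle inequality and Cauchy--Schwarz differs from $\sum_k \|\hat u_n^{[:,k]}\|$ by at most $\sqrt{n_s}\,d_n$.

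Working on the event of probability at least $1-\delta$ on which Lemma~\ref{lem:generalization} holds, I would start from $R_n(\hat\alpha_n) + \lambda_n L(\hat\alpha_n) \le R_n(\alpha_n^\star) + \lambda_n L(\alpha_n^\star)$, replace each $R_n$ by $R$ at a cost of $c_1\log n/\sqrt n$ per side, use $L(\hat\alpha_n) - L(\alpha_n^\star) \ge \|\hat v_n\| - \sqrt{n_s}\,d_n$ (bounding the significant-feature difference below by $-\sqrt{n_s}\,d_n$ and using $\sum_l\|\hat v_n^{[:,l]}\| \ge \|\hat v_n\|$), and finally invoke Lemma~\ref{lem:information} in the form $R(\hat\alpha_n) - R(\alpha^*) \ge c_2 d_n^\nu \ge 0$. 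This yields the master inequality
\[
c_2\, d_n^\nu + \lambda_n \|\hat v_n\| \;\le\; 2c_1\,\frac{\log n}{\sqrt n} + \sqrt{n_s}\,\lambda_n\, d_n .
\]
The first claimed bound then follows by discarding $\lambda_n\|\hat v_n\| \ge 0$ and splitting $\sqrt{n_s}\,\lambda_n d_n \le \tfrac{c_2}{2} d_n^\nu + C_1 \lambda_n^{\nu/(\nu-1)}$ by Young's inequality with conjugate exponents $\nu$ and $\nu/(\nu-1)$, so that $\tfrac{c_2}{2} d_n^\nu \le 2c_1 \log n/\sqrt n + C_1 \lambda_n^{\nu/(\nu-1)}$ and one takes $\nu$-th roots. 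The second bound follows from the same master inequality by instead discarding $c_2 d_n^\nu \ge 0$ and dividing through by $\lambda_n$, giving $\|\hat v_n\| \le 2c_1 \log n/(\lambda_n\sqrt n) + \sqrt{n_s}\,d_n$ (the constants can be enlarged to match the statement).

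For the ``moreover'' part I would substitute $\lambda_n \sim n^{-1/4}$ into the two displays and simplify: then $\lambda_n^{\nu/(\nu-1)} = n^{-\nu/(4(\nu-1))}$ and $\log n/(\lambda_n\sqrt n) \sim n^{-1/4}\log n$, and a direct comparison of exponents shows both $d_n$ and $\|\hat v_n\|$ are of order $(\log n / n)^{1/(4(\nu-1))}$ up to logarithmic factors; this choice also keeps $\lambda_n \to 0$ and $\lambda_n\sqrt n \to \infty$, which is exactly what the $\lambda_n^{\nu/(\nu-1)}$ and $\log n/(\lambda_n\sqrt n)$ terms respectively need in order to vanish.

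I expect the crux to be the design of the reference point $\alpha_n^\star$: the whole argument hinges on the fact that moving from $\hat\alpha_n$ to the $v$-free projection of its nearest risk minimizer costs at most an $O(d_n)$ change in penalty rather than an $O(1)$ change, which is exactly what Lemma~\ref{lem:characterization}(ii) was set up to provide, and on the observation that the \emph{uniform} generalization bound (not a pointwise one) must be used since $\alpha_n^\star$ depends on the sample. The subsequent Young's-inequality rebalancing---turning the crude $O(\lambda_n)$ contribution into the sharper $O(\lambda_n^{\nu/(\nu-1)})$---is the other step that needs a moment of care, but it is routine once the master inequality is in place.
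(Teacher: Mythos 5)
Your proposal is correct and follows essentially the same route as the paper's proof: compare $\hat\alpha_n$ against (the $v$-zeroed projection of) its nearest point in $\mc{H}^*$, invoke the Lojasiewicz bound of Lemma~\ref{lem:information}, the uniform generalization bound of Lemma~\ref{lem:generalization}, the Lipschitzness of the penalty, and Young's inequality. The only cosmetic difference is that you derive both bounds from a single master inequality using $\phi(\bar\alpha_n)$ as the reference point, whereas the paper runs two parallel chains (one against $\beta_n$ for the distance bound, one against $\phi(\beta_n)$ for the $\|\hat v_n\|$ bound); the content is the same.
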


\begin{proof}
Let $\phi(\alpha)$ denote the weight vector obtained from $\alpha$ by setting the $v$-components  to zero.
If we define $\beta_n = \argmin_{\alpha \in \mc{H}^*}{\|\hat \alpha_n - \alpha\|}$ then $\phi(\beta_n) \in \mc{H}^*$ and $R(\beta_n) = R(\phi(\beta_n))$.

Since $L(\alpha)$ is a Lipschitz function, we have
\begin{align*}
c_2 d(\hat \alpha_n,  \mc{H}^*)^{\nu} = c_2\|\beta_n -\hat \alpha_n\|^{\nu} & \le R(\hat \alpha_n)- R( \beta_n) \\
 &\le 2  c_1 \frac{\log n}{\sqrt{n}} + \lambda_n \left( L(\beta_n) - L(\hat \alpha_n)\right) \le 2  c_1 \frac{\log n}{\sqrt{n}} + \lambda_n C\| \beta_n - \hat \alpha_n \|
\end{align*}
which implies (through Young's inequality, details in Appendix) that
\[
\|\beta_n - \hat \alpha_n\|^{\nu} \le C_1 \lambda_n^{\nu/(\nu-1)} + C_2  \frac{\log n}{\sqrt{n}}.
\]
Let $K$ denote the part of the regularization term without the $v$-component.
We note that $K$ is a Lipschitz function, $K(\phi(\alpha))=K(\alpha)$ for all $\alpha$, and $R(\phi(\beta_n)) = R(\hat \alpha_n)$.
Thus,
\begin{align*}
 \lambda_n \sum_l{\|\hat v^{[:,l]}_n\|} &\le R_n(\phi(\beta_n)) - R_n(\hat \alpha_n) + \lambda_n[K(\phi(\beta_n)) - K(\hat \alpha_n)]\\
&\le  2 c_1 \frac{\log n}{\sqrt{n}} + R(\phi(\beta_n)) - R(\hat \alpha_n)   + \lambda_n[K(\beta_n) - K(\hat \alpha_n)]\\
&\le 2  c_1 \frac{\log n}{\sqrt{n}} + \lambda_n C \|\beta_n -\hat \alpha_n\|.
\end{align*}
This completes the proof.
\end{proof}

Together, the two parts of Theorem $\ref{gl}$ shows that the Group Lasso estimator converges to the set of ``well-behaved'' optimal hypotheses $\mc{K} =\{\alpha \in \mc{W}: f_{\alpha} = f_{\alpha^*}~\text{and}~ v_{\alpha}=0 \}$ (proof in Appendix).. 

\begin{Corollary}
\label{lem:mck}
For any $\delta>0$, there exist $C_\delta>0$ and $N_\delta>0$ such that for all $n \geq N_\delta$, 
\[
\mathbb{P} \left[
d(\hat \alpha_n,  \mc{K}) \le 4  c_1 \frac{\log n}{\lambda_n\sqrt{n}}  + C_{\delta} \left( \lambda_n^{\nu/(\nu-1)} +  \frac{\log n}{\sqrt{n}}\right)^{1/\nu}\right] \ge 1 -\delta.
\]
\end{Corollary}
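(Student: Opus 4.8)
The plan is to combine the two conclusions of Theorem~\ref{gl} with the elementary fact that $d(\hat\alpha_n,\mc{K})$ is controlled jointly by the distance to $\mc{H}^*$ and the size of the $v$-components. First I would fix $\delta>0$ and work on the event of probability at least $1-\delta$ on which both displays of Theorem~\ref{gl} hold. On that event, pick $\beta_n = \argmin_{\alpha\in\mc{H}^*}\|\hat\alpha_n-\alpha\|$, so that $\|\hat\alpha_n-\beta_n\| = d(\hat\alpha_n,\mc{H}^*)$, and set $\gamma_n = \phi(\beta_n)$, the vector obtained from $\beta_n$ by zeroing out its $v$-components. By Lemma~\ref{lem:characterization}(ii), $\gamma_n\in\mc{H}^*$; since also $v_{\gamma_n}=0$, we have $\gamma_n\in\mc{K}$, hence $d(\hat\alpha_n,\mc{K})\le \|\hat\alpha_n-\gamma_n\|$.

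Next I would bound $\|\hat\alpha_n-\gamma_n\|$ by the triangle inequality through $\beta_n$:
\[
d(\hat\alpha_n,\mc{K}) \le \|\hat\alpha_n-\beta_n\| + \|\beta_n-\gamma_n\| = d(\hat\alpha_n,\mc{H}^*) + \|\beta_n-\phi(\beta_n)\|.
\]
The second term is exactly the norm of the $v$-components of $\beta_n$, since $\phi$ only alters those. Because $\beta_n$ is the projection of $\hat\alpha_n$ onto $\mc{H}^*$, the $v$-block of $\beta_n$ is within $d(\hat\alpha_n,\mc{H}^*)$ of $\hat v_n$ in Euclidean norm, so $\|v_{\beta_n}\| \le \|\hat v_n\| + d(\hat\alpha_n,\mc{H}^*)$ (using $\|\cdot\|$ on the $v$-block is dominated by $\|\cdot\|$ on the full vector, and that the sum of block norms $\sum_l\|\hat v_n^{[:,l]}\|$ already appears in Theorem~\ref{gl} up to a fixed constant factor depending only on $n_z$). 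Substituting the first display of Theorem~\ref{gl} for $d(\hat\alpha_n,\mc{H}^*)$ and the bound $\|\hat v_n\| \le 4c_1\frac{\log n}{\lambda_n\sqrt n} + C' d(\hat\alpha_n,\mc{H}^*)$ for the $v$-norm, and collecting the $d(\hat\alpha_n,\mc{H}^*)$ terms into a single constant multiple of $\bigl(\lambda_n^{\nu/(\nu-1)} + \frac{\log n}{\sqrt n}\bigr)^{1/\nu}$, yields
\[
d(\hat\alpha_n,\mc{K}) \le 4c_1\frac{\log n}{\lambda_n\sqrt n} + C_\delta\Bigl(\lambda_n^{\nu/(\nu-1)} + \tfrac{\log n}{\sqrt n}\Bigr)^{1/\nu},
\]
after absorbing the various fixed constants (from $n_z$, $C'$, $C$) into $C_\delta$. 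Choosing $N_\delta$ large enough that the $N_\delta$ from Theorem~\ref{gl} applies completes the argument.

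The only mildly delicate point is bookkeeping the constants so that the coefficient of the first term comes out exactly $4c_1$ as stated, rather than a larger multiple: this requires noticing that the $4c_1\frac{\log n}{\lambda_n\sqrt n}$ appearing in the $\|\hat v_n\|$ bound of Theorem~\ref{gl} is already the dominant contribution and that every other piece (including the $d(\hat\alpha_n,\mc{H}^*)$ correction in $\|v_{\beta_n}\|$) is of the smaller order $\bigl(\lambda_n^{\nu/(\nu-1)} + \frac{\log n}{\sqrt n}\bigr)^{1/\nu}$ and can be folded into $C_\delta$. There is no real obstacle here — the corollary is essentially a repackaging of Theorem~\ref{gl} via Lemma~\ref{lem:characterization}(ii), which guarantees that projecting onto $\mc{H}^*$ and then zeroing the $v$-block lands in $\mc{K}$.
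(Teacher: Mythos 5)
Your argument is correct and is essentially the paper's own proof: both bound $d(\hat\alpha_n,\mc{K})$ by $\|\hat\alpha_n-\phi(\beta_n)\|$ using Lemma~\ref{lem:characterization}(ii) to place $\phi(\beta_n)$ in $\mc{K}$, then apply the triangle inequality through $\beta_n$, bound $\|\beta_n-\phi(\beta_n)\|=\|v_{\beta_n}\|$ by $\|\hat v_n\|$ plus a multiple of $d(\hat\alpha_n,\mc{H}^*)$, and invoke both displays of Theorem~\ref{gl}. Your closing remarks on constant bookkeeping are sound and only make explicit what the paper leaves implicit.
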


\subsection{Feature selection consistency of GroupLasso + AdaptiveGroupLasso}

We are now ready to prove the main theorem of our paper. 

\begin{Theorem}[Feature selection consistency of GL+AGL]
Let $\gamma>0$, $\epsilon>0$, $\lambda_n \sim n^{-1/4}$, and $\zeta_n =\Omega (n^{-\gamma/(4\nu -4)+ \epsilon})$, 
then the GroupLasso+AdaptiveGroupLasso is feature selection consistent. 
\label{thm:main}
\end{Theorem}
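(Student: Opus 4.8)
The plan is to run the standard two-regime argument for adaptive group-lasso estimators, with Theorem~\ref{gl} supplying the behaviour of the base Group Lasso weights. Fix $\delta>0$. Applying Theorem~\ref{gl} with $\lambda_n\sim n^{-1/4}$, on an event $E_n$ of probability at least $1-\delta/2$ we have $d(\hat\alpha_n,\mc{H}^*)\le C(\log n/n)^{1/(4(\nu-1))}$ and $\|\hat v_n\|\le C(\log n/n)^{1/(4(\nu-1))}$. Combined with Lemma~\ref{lem:characterization}(i) and the triangle inequality, this yields, for $n$ large, $\|\hat u_n^{[:,k]}\|\ge c_0/2$ for every significant $k$ and $\|\hat v_n^{[:,l]}\|\le C(\log n/n)^{1/(4(\nu-1))}$ for every non-significant $l$. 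Hence the adaptive weights split into two regimes: the significant-feature weights stay bounded, $\|\hat u_n^{[:,k]}\|^{-\gamma}\le(2/c_0)^{\gamma}$, while the non-significant ones blow up polynomially, $\|\hat v_n^{[:,l]}\|^{-\gamma}\ge c\,(n/\log n)^{\gamma/(4\nu-4)}$. All further statements are made on $E_n$ intersected with finitely many additional $1-\delta'$ events below, so the final probability can be kept above $1-\delta$ by the union bound.

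\emph{Step 1: prediction consistency of $\tilde\alpha_n$ and retention of significant features.} Let $\beta_n=\argmin_{\alpha\in\mc{H}^*}\|\tilde\alpha_n-\alpha\|$ and let $\phi(\beta_n)$ be its $v$-zeroed version, so $\phi(\beta_n)\in\mc{K}$ and $R(\phi(\beta_n))=R(\alpha^*)$. Since $\phi(\beta_n)$ has no $v$-component, $M_n(\phi(\beta_n))=\sum_k\|u^{[:,k]}_{\phi(\beta_n)}\|/\|\hat u_n^{[:,k]}\|^{\gamma}\le C$ (bounded numerators on the compact $\mc{W}$, denominators bounded below). From $R_n(\tilde\alpha_n)+\zeta_nM_n(\tilde\alpha_n)\le R_n(\phi(\beta_n))+\zeta_nM_n(\phi(\beta_n))$, dropping the nonnegative term $\zeta_nM_n(\tilde\alpha_n)$, and applying Lemma~\ref{lem:generalization} twice, we get $R(\tilde\alpha_n)-R(\alpha^*)\le 2c_1\log n/\sqrt n+C\zeta_n$. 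Assuming in addition $\zeta_n=o(1)$ (which the prescribed rate satisfies once $\epsilon<\gamma/(4\nu-4)$), Lemma~\ref{lem:information} gives $d(\tilde\alpha_n,\mc{H}^*)\le c_2^{-1/\nu}(2c_1\log n/\sqrt n+C\zeta_n)^{1/\nu}\to0$, so by Lemma~\ref{lem:characterization}(i), $\|u^{[:,k]}_{\tilde\alpha_n}\|\ge c_0-d(\tilde\alpha_n,\mc{H}^*)\ge c_0/2>0$ for every significant $k$ once $n$ is large. A computation mirroring the $v$-bound in Theorem~\ref{gl} (with $M_n$ in place of $L$) also gives $\|\tilde v_n\|\to0$, so for large $n$ the $v^{[:,l]}$-blocks lie strictly inside $[-A,A]^{d_1}$.

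\emph{Step 2: elimination of non-significant features.} Fix a non-significant index $l$ and suppose, towards a contradiction, $v^{[:,l]}_{\tilde\alpha_n}\neq0$. Then $M_n$ is differentiable in the $v^{[:,l]}$-block at $\tilde\alpha_n$, this block is interior, and first-order stationarity of the objective forces $\|\nabla_{v^{[:,l]}}R_n(\tilde\alpha_n)\|=\zeta_n/\|\hat v_n^{[:,l]}\|^{\gamma}\ge c\,\zeta_n(n/\log n)^{\gamma/(4\nu-4)}$, which with $\zeta_n=\Omega(n^{-\gamma/(4\nu-4)+\epsilon})$ is $\Omega(n^{\epsilon}(\log n)^{-\gamma/(4\nu-4)})\to\infty$. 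On the other hand, writing $f_{\tilde\alpha_n}(X_i)-Y_i=(f_{\tilde\alpha_n}(X_i)-f_{\alpha^*}(X_i))-\epsilon_i$, using that $\|\nabla_{v^{[:,l]}}f_\alpha\|$ is bounded on $\mc{W}\times\bar{\mc{X}}$ by analyticity, and Cauchy--Schwarz, we get $\|\nabla_{v^{[:,l]}}R_n(\tilde\alpha_n)\|\le 2C(\tfrac1n\sum_i(f_{\tilde\alpha_n}(X_i)-f_{\alpha^*}(X_i))^2)^{1/2}+2\sup_{\alpha\in\mc{W}}\|\tfrac1n\sum_i\epsilon_i\nabla_{v^{[:,l]}}f_\alpha(X_i)\|$. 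The first term is $O((\log n/\sqrt n+\zeta_n)^{1/2})$ by Step~1 together with a uniform law of large numbers for the parametric family $\{(f_\alpha-f_{\alpha^*})^2:\alpha\in\mc{W}\}$ (same argument as Lemma~\ref{lem:generalization}); the second is a vector-valued empirical process with sub-Gaussian summands and a bounded, parametrically-indexed integrand, hence $O(\sqrt{\log n/n})$ on a $1-\delta'$ event. Thus $\|\nabla_{v^{[:,l]}}R_n(\tilde\alpha_n)\|=O(\sqrt{\zeta_n}+n^{-1/4}\sqrt{\log n})\to0$, contradicting the lower bound. Hence $v^{[:,l]}_{\tilde\alpha_n}=0$ for every non-significant $l$ once $n$ is large; together with Step~1 and the union bound over the finitely many events, this is exactly feature selection consistency.

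\emph{Main obstacle.} Step~2 is the crux. One must (a) legitimately invoke the first-order/KKT condition for an objective that is non-convex in $\alpha$ and non-smooth precisely at $v^{[:,l]}=0$ (handled by first showing $\|\tilde v_n\|\to0$, so the relevant block is interior), and (b) produce the matching upper bound on $\|\nabla_{v^{[:,l]}}R_n(\tilde\alpha_n)\|$, whose delicate piece is the uniform-in-$\alpha$ control of the noise term $\tfrac1n\sum_i\epsilon_i\nabla_{v^{[:,l]}}f_\alpha(X_i)$. This is where the fixed-architecture, compact-$\mc{W}$, analyticity hypotheses are indispensable: they give bounded gradients and a finite-dimensional (hence polynomially-covered) index set, so a chaining/covering argument with sub-Gaussian tails delivers the $O(\sqrt{\log n/n})$ rate. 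Everything else is bookkeeping with the rates already furnished by Theorem~\ref{gl}, Lemma~\ref{lem:information}, and Lemma~\ref{lem:generalization}.
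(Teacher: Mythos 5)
Your proposal is correct in outline, and your Step 1 is essentially the paper's (the paper uses $\alpha^*$ itself as the comparison point, tacitly taking $v_{\alpha^*}=0$ via Lemma~\ref{lem:characterization}(ii), whereas your comparator $\phi(\beta_n)\in\mc{K}$ makes the boundedness of $M_n$ at the comparator explicit --- a minor improvement). Where you genuinely diverge is the elimination step, and there the paper never touches gradients or KKT conditions. Assuming $\tilde v_n^{[:,k]}\ne 0$, it compares the objective at $\tilde\alpha_n$ with the objective at $g_n$, the feasible configuration obtained by zeroing that single block, which gives
\[
\zeta_n\,\|\hat v_n^{[:,k]}\|^{-\gamma}\,\|\tilde v_n^{[:,k]}\| \;\le\; R_n(g_n)-R_n(\tilde\alpha_n) \;\le\; M_\delta\,\|\tilde v_n^{[:,k]}\|,
\]
the last inequality being the probabilistic Lipschitzness of $R_n$ proved in the Appendix; dividing by $\|\tilde v_n^{[:,k]}\|$ yields $\zeta_n\|\hat v_n^{[:,k]}\|^{-\gamma}\le M_\delta$, contradicting the divergence of the adaptive weight. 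This zeroth-order comparison needs no interiority (zeroing a block always stays in $[-A,A]^{n_\alpha}$, so it works even if $\tilde\alpha_n$ sits on the boundary of $\mc{W}$), no uniform law of large numbers for $(f_\alpha-f_{\alpha^*})^2$, and no uniform control of the noise-gradient process $\frac{1}{n}\sum_i\epsilon_i\nabla_{v}f_\alpha(X_i)$ --- only a single Lipschitz constant for $R_n$, obtained from bounded gradients and a bound on $\frac{1}{n}\sum_i|\epsilon_i|$. Your first-order route is the classical adaptive-lasso argument and is workable here, but it is strictly more expensive: you must first show $\|\tilde v_n\|\to 0$ to place the block in the interior before stationarity can be invoked (a point you correctly flag), and the two empirical-process bounds you identify as the ``delicate piece'' are precisely the machinery the paper's comparison trick lets you skip. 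What your route buys is a sharper quantitative picture (the fit-term gradient vanishes at rate $O(\sqrt{\zeta_n}+\sqrt{\log n/n})$); for the consistency claim itself the paper's argument is shorter and more robust. Both routes need $\zeta_n\to 0$ in Step 1, which you make explicit and the paper leaves implicit in its $\Omega(\cdot)$ rate condition.
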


\begin{proof}
Since $d(\hat \alpha_n, \mc{H^*}) \to 0$ (Theorem $\ref{gl}$), we have $\min_{\alpha \in \mc{H}^*} \|\hat u_n^{[:, k]} - u^{[:, k]}\|  \to 0$ for all $k$. 
By Lemma $\ref{lem:characterization}$, we conclude that $\hat u_n^{[:, k]}$ is bounded away from zero as $n \to \infty$. 
Thus,
\[
M_n(\alpha^*) = \sum_{i=1}^{n_s} \frac{1}{\| \hat u_n\|^{\gamma}} \|u^{*[:, k]}\|  < \infty \h \text{and}
\]
\[
c_2 d(\tilde \alpha_n, \mc{H}^*)^{\nu} \le R(\tilde\alpha_n)- R( \alpha^*)  \le 2  c_1 \frac{\log n}{\sqrt{n}} + \zeta_n \left( M_n(\alpha^*) - M_n(\hat \alpha_n)\right)
\le 2  c_1 \frac{\log n}{\sqrt{n}} + \zeta_n M_n(\alpha^*)
\]
which shows that $d(\tilde \alpha_n, \mc{H^*})  \to 0$. 
Thus $\tilde u_n^{[:, k]}$ is also bounded away from zero for $n$ large enough. 

We now assume that $\tilde v_n^{[:, k]} \ne 0$ for some $k$ and define a new weight configuration $g_n$ obtained from $\tilde \alpha_n$ by setting the $v^{[:, k]}$ component to $0$.
By definition of the estimator $\tilde \alpha_n$, we have
\[
R_n( \tilde \alpha_n) +  \zeta_n   \frac{1}{\|\hat v_n^{[:, k]}\|^{\gamma}} \|\tilde v_n^{[:, k]}\|\le R_n(g_n).
\]
By the (probabilistic) Lipschitzness of the empirical risk (proof in Appendix), there exists $M_{\delta}$ s.t.
\[
 \zeta_n   \frac{1}{\|\hat v_n^{[:, k]}\|^{\gamma}} \|\tilde v_n^{[:, k]}\| \le  R_n(g_n) - R_n( \tilde \alpha_n) \le M_{\delta} \|g_n - \tilde \alpha_n\|  = M_{\delta} \|\tilde v_n^{[:, k]}\|
\]
with probability at least $1-\delta$. 
Since $\tilde v_n^{[:, k]} \ne 0$, we deduce that $\zeta_n   \frac{1}{\|\hat v_n^{[:, k]}\|^{\gamma}}  \le M_{\delta}$.
This contradicts Theorem $\ref{gl}$, which proves that for $n$ large enough
\[
\zeta_n \frac{1}{\|\hat v_n^{[:, k]}\|^{\gamma}} \ge C_\delta^{-\gamma} ~\zeta_n \left(\frac{n}{\log n}\right)^{\frac{\gamma}{4(\nu -1)}}  \ge 2 M_{\delta}
\]
with probability at least $1-\delta$. This completes the proof. 
\end{proof}

\paragraph{Remark.} 
One notable aspect of Theorem $\ref{thm:main}$ is the absence of regularity conditions on the correlation of the inputs often used in lasso-type analyses, such as Irrepresentable Conditions \citep{meinshausen2006high} and \citep{zhao2006model}, restricted isometry property \citep{candes2005decoding}, restricted eigenvalue conditions \citep{bickel2009simultaneous, meinshausen2009lasso} or sparse Riesz condition \citep{zhang2008sparsity}. 
We recall that by using different regularization strengths for individual parameters, adaptive lasso estimators often require less strict conditions for selection consistency than standard lasso. 
For linear models, the classical adaptive lasso only assumes that the limiting design matrix is positive definite, i.e., there is no perfect correlation among the inputs) \citep{zou2006adaptive}. 
In our framework, this corresponds to the assumption that the density of $X$ is positive on its open domain (Assumption $\ref{ass}$), which plays an essential role in the proof of Lemma $\ref{lem:characterization}$.

\subsection{Simulations}

\begin{figure}
\begin{center}
\includegraphics[width=0.8\textwidth]{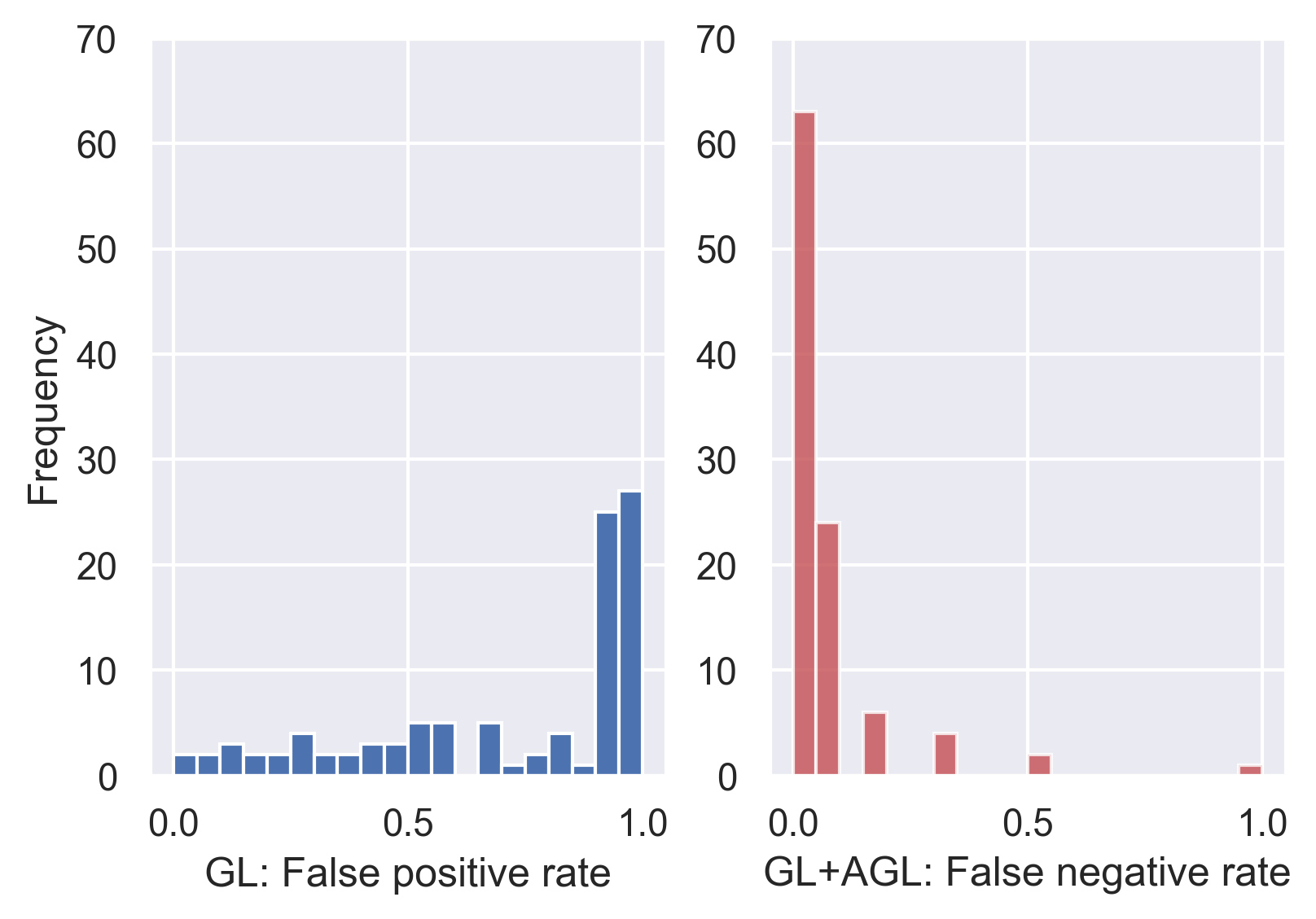}
\caption{The false positive rate and false negative rate of GL (left panel) and GL+AGL (right panel), respectively.
Note that the true positive rate of GL and true negative rate of GL+AGL are 100\% for all but one run and are not shown in the figure. 
}
\label{fig:simulated}
\end{center}
\end{figure}

To further illustrate the theoretical findings of the paper, we use both synthetic and real data to investigate algorithmic behaviors of GL and GL+AGL. \footnote{The code is available at \url{https://github.com/vucdinh/alg-net}.}
The simulations, implemented in Pytorch, focus on single-output deep feed-forward networks with three hidden layers of constant width. 
In these experiments, regularizing constants are chosen from a course grid $\{0.001, 0.01, 0.05, 0.1, 0.5, 1, 2\}$ with $\gamma=2$ using average test errors from random train-test splits of the corresponding dataset. 
The algorithms are trained over 20000 epochs using proximal gradient descent, which allows us to identify the exact support of estimators without having to use a cut-off value for selection.

In the first experiment, we consider a network with three hidden layers of 20 nodes. 
The input consists of $50$ features, 10 of which are significant while the others are rendered insignificant by setting the corresponding weights to zero. 
We generate $100$ datasets of size $n=5000$ from the generic model $Y= f_{\alpha^*}(X) + \epsilon$ where $\epsilon \sim \mathcal{N}(0, 1)$ and non-zero weights of $\alpha^*$ are sampled independently from $\mathcal{N}(0,1)$. We perform GL and GL+AGL on each simulated dataset with regularizing constants chosen using average test errors from three random three-fold train-test splits. 
We observe that \emph{overall, GL+AGL have a superior performance, selecting the correct support in $63$ out of $100$ runs, while GL cannot identify the support in any run}. 
Except for one pathological case when both GL and GL+AGL choose a constant model, GL always selects the correct significant inputs but fail to de-select the insignificant ones (Figure 1, left panel) while GL+AGL always performs well with the insignificant inputs but sometimes over-shrinks the significant ones (Figure 1, right panel). 

Next, we apply the methods to the Boston housing dataset \footnote{http://lib.stat.cmu.edu/datasets/boston}. 
This dataset consists of 506 observations of house prices and 13 predictors. 
To analyze the data, we consider a network with three hidden layers of 10 nodes. 
GL and GL+AGL are then performed on this dataset using average test errors from 20 random train-test splits (with the size of the test sets being $25\%$ of the original dataset). 
GL identifies all 13 predictors as important, while GL+AGL only selects 11 of them.
To further investigate the robustness of the results, we follow the approach of \citet{lemhadri2019neural} to add 13 random Gaussian noise predictors to the original dataset for analysis. 
 $100$ such datasets are created to compare the performance of GL against GL+AGL using the same experimental setting as above. 
 The results are presented in Figure 2, for which we observe that GL struggles to distinguish the random noises from the correct predictors.
We note that \cite{lemhadri2019neural} identifies 11 of the original predictors along with 2 random predictors as the optimal set of features for prediction, which is consistent with the performance of GL+AGL.

\begin{figure}
\begin{center}
\includegraphics[width=0.8\textwidth]{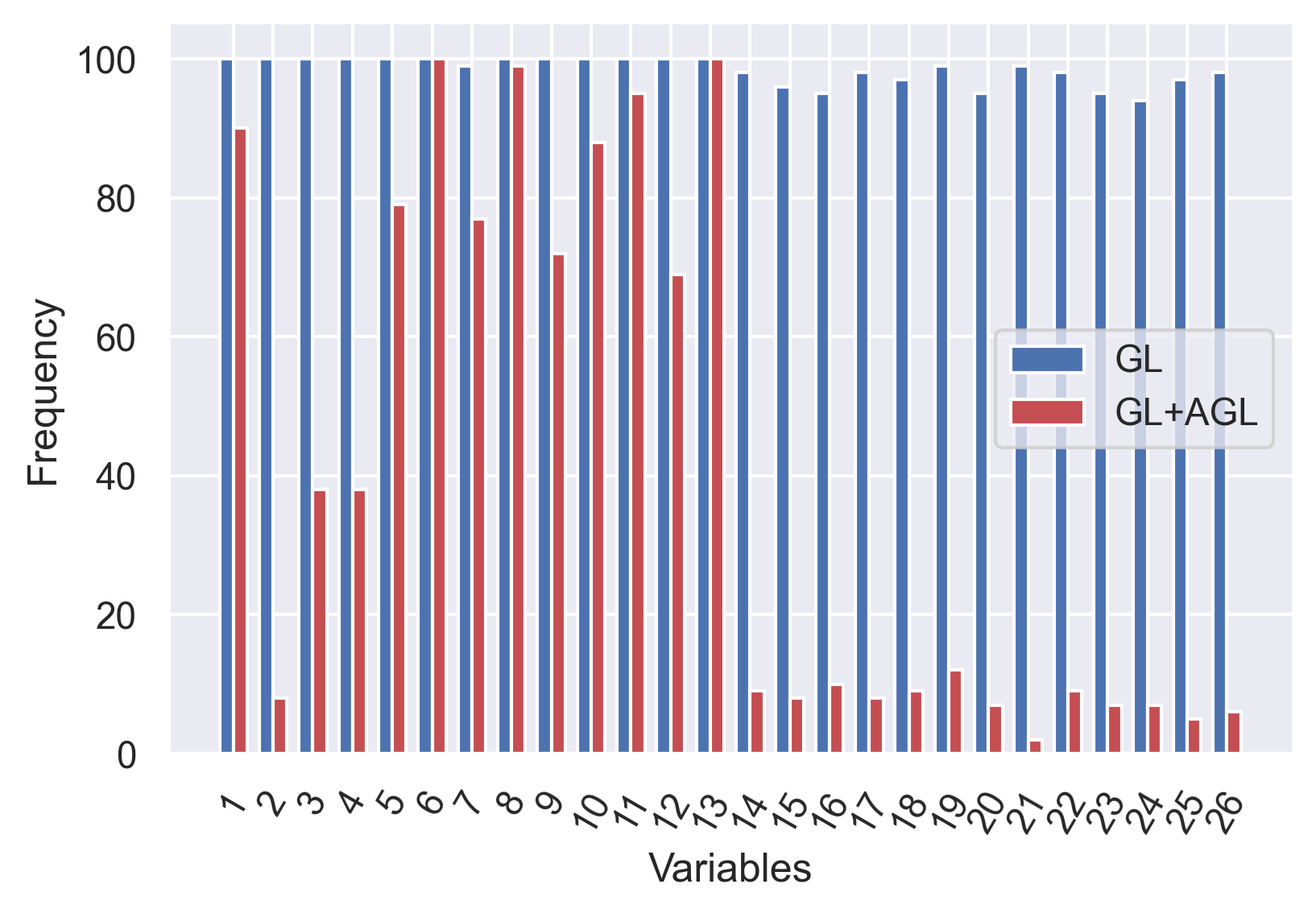}
\caption{
Performance on the Boston Housing dataset: 
The frequency of being selected by GL (blue) and GL+AGL (red) of each of the 26 predictors. Note that the predictors $14-26$ are random Gaussian noise predictors while $1-13$ are from the original dataset. 
}
\label{fig:boston}
\end{center}
\end{figure}

\section{Conclusions and Discussions}

In this work, we prove that GL+AGL is feature-selection-consistent for all analytic deep networks that interact with inputs through a finite set of linear units. 
Both theoretical and simulation results of the work advocate the use of GL+AGL over the popular Group Lasso for feature selection. 

\subsection{Comparison to related works}

To the best of our knowledge, this is the first work that establishes selection consistency for deep networks. 
This is in contrast to \cite{dinh2020consistent}, \cite{liang2018bayesian} and \cite{ye2018variable}, which only provide results for shallow networks with one hidden layer, or 
\cite{polson2018posterior}, \cite{feng2017sparse}, \cite{liu2019variable}, which focus on posterior concentration, prediction consistency, parameter-estimation consistency and convergence of feature importance.
We note that for classical linear model with lasso estimate, it is known that (see Section 2 of \cite{zou2006adaptive} and the discussion therein) the lasso estimator is parameter-estimation consistent as long as the regularizing parameter $\lambda_n \to 0$ (\cite{zou2006adaptive},  Lemmas 2 and 3), but is not feature-selection consistent for $\lambda_n \sim n^{-1/2}$ (\cite{zou2006adaptive}, Proposition 1) or for all choices of $\lambda_n$ if some necessary condition on the covariance matrix is not satisfied (\cite{zou2006adaptive}, Theorem 1).
For both linear model and neural network, parameter-estimation consistency directly implies prediction consistency and convergence of feature importance. 
In general, there's no known trivial way to extending these approaches to obtain feature-selection-consistency. 

Moreover, existing works usually assume that the network used for training has exactly the same size as the minimal network that generates the data. 
This is assumption is either made explicitly (as in \cite{dinh2020consistent}) or implicitly implied by the regularity of the Hessian matrix at the optima (Assumption 5 in \cite{ye2018variable}  and Condition 1 in  \cite{feng2017sparse}). 
We note that these latter two conditions cannot be satisfied if the size of the training network is not ``correct'' (for example, when data is generated by a one-hidden-layer network with 5 hidden nodes, but the training network is one with 10 nodes). 
The framework of our paper does not have this restriction.
Finally, since our paper focus on model interpretability, the framework has been constructed in such a way that all assumptions are minimal/verifiable. 
This is in contrast to many previous results. 
For example, \cite{ye2018variable} takes Assumption 6 (which is difficult to check) as given while we can avoid this Assumption using Lemma 3.2.

\subsection{Future works}

There are many avenues for future directions.
First, while simulations seem to hint that Group Lasso may not be optimal for feature selection with neural networks, a rigorous answer to this hypothesis requires a deeper understanding of the behavior of the estimator that is out of the scope of this paper. 
Second, the analyses in this work rely on the fact that the network model is analytic, which puts some restrictions on the type of activation function that can be used. 
While many tools to study non-analytic networks (as well as other learning settings, e.g., for classification, for regression with a different loss function, or networks that does not have a strict structure of layers) are already in existence, such extensions of the results require non-trivial efforts.

Throughout the paper, we assume that the network is fixed ($p$) when the sample size ($n$) increases. 
Although this assumption is reasonable in many application settings, in some contexts, for example for the task of identifying genes that increase the risk of a type of cancer, ones would be interested in the case of $p \gg n$. 
There have been existing results of this type for neural networks from the prediction aspect of the problem  \cite{feng2017sparse, farrell2018deep} and it would be of general interest how analyses of selection consistency apply in those cases. 
Finally, since the main interest of this work is theoretical, many aspects of the performance of the GL+AGL across different experimental settings and types of networks are left as subjects of future work.

\section*{Acknowlegments}

LSTH was supported by startup funds from Dalhousie University, the Canada Research Chairs program, and the Natural Sciences and Engineering Research Council of Canada (NSERC) Discovery Grant RGPIN-2018-05447. VD was supported by a startup fund from University of Delaware and National Science Foundation grant DMS-1951474.

\newpage

\section*{Broader Impact}

Deep learning has transformed modern science in an unprecedented manner and created a new force for technological developments. 
However, its black-box nature and the lacking of theoretical justifications have hindered its applications in fields where correct interpretations play an essential role. 
In many applications, a linear model with a justified confidence interval and a rigorous feature selection procedure is much more favored than a deep learning system that cannot be interpreted. 
Usage of deep learning in a process that requires transparency such as judicial and public decisions is still completely out of the question.

To the best of our knowledge, this is the first work that establishes feature selection consistency, an important cornerstone of interpretable statistical inference, for deep learning. 
The results of this work will greatly extend the set of problems to which statistical inference with deep learning can be applied.
Medical sciences, public health decisions, and various fields of engineering, which depend upon well-founded estimates of uncertainty, fall naturally on the domain the work tries to explore.
Researchers from these fields and the public alike may benefit from such a development and no one is put at disadvantage from this research.

By trying to select a parsimonious and transparent model out of an over-parametrized deep learning system, the approach of this work further provides a systematic way to detect and reduce bias in machine learning analysis. 
The analytical tools and the theoretical framework derived in this work may also be of independent interest in statistics, machine learning, and other fields of applied sciences.

\newpage

\bibliography{alg}
\bibliographystyle{plainnat}

\newpage

\section{Appendix}

\subsection{Details of the proof of Theorem \ref{gl}}

In the first part of the proof of Theorem \ref{gl}, we established that
\begin{align*}
c_2 \| \beta_n - \hat \alpha_n \|^{\nu} \le 2  c_1 \frac{\log n}{\sqrt{n}} + \lambda_n C\| \beta_n - \hat \alpha_n \|.
\end{align*}
Using Young's inequality, we have
\begin{align*}
\lambda_n C \|\beta_n - \hat \alpha_n\|  &\le \frac{1}{\nu} \left(\frac{(c_2 \nu)^{1/\nu}}{2} \| \beta_n - \hat \alpha_n \|\right)^{\nu} +   \frac{\nu-1}{\nu}\left(\frac{2C}{(c_2 \nu)^{1/\nu}} \lambda_n\right)^{\nu/(\nu-1)}\\
& = \frac{c_2}{2}  \| \beta_n - \hat \alpha_n \|^{\nu} + \frac{2(\nu-1)C^{\nu/(\nu-1)}}{\nu (c_2 \nu)^{1/(\nu-1)}} \lambda^{\nu/(\nu-1)}.
\end{align*}
Combining the two estimates, we have
\[
 \frac{c_2}{2} \| \beta_n - \hat \alpha_n \|^{\nu} \le 2  c_1 \frac{\log n}{\sqrt{n}} + \frac{2(\nu-1)C^{\nu/(\nu-1)}}{\nu (c_2 \nu)^{1/(\nu-1)}} \lambda^{\nu/(\nu-1)}.
\]

\subsection{Proof of Corollary \ref{lem:mck}}

We note that $\phi(\beta_n) \in \mc{K}$.
Thus,
\begin{align*}
\min_{\alpha \in \mc{K}} \|\hat \alpha_n -\alpha\| \le \|\hat \alpha_n - \phi(\beta_n)\| &\le \|\hat \alpha_n - \beta_n\|  + \|\beta_n - \phi(\beta_n)\| \\
&\le \|\hat \alpha_n - \beta_n\|  + \|v_{\beta_n}\|\\
&\le \|\hat \alpha_n - \beta_n\|  + \|\hat v_n\| + C\|\hat \alpha_n - \beta_n\| 
\end{align*}
and the bound can be obtained using the results of Theorem $\ref{gl}$.

\subsection{Probabilistic Lipschitzness of the empirical risk}

Since both $\mc{W}$ and $\mc{X}$ are bounded and $f_{\alpha}$ is analytic, there exist $C_1, C_2>0$ such that 
\[
|\nabla_{\alpha}f_{\alpha}(x)| \le C_1 \h \text{and} \h |f_{\alpha}(x)| \le C_2 \h  \forall \alpha \in \mc{W}, x \in \mc{X}.
\]
Therefore, 
\begin{align*}
|R(\alpha) - R(\beta)| &= \left |  \mathbb{E}  \left[(Y-f_{\alpha}(X))^2 - (Y-f_{\alpha}(X))^2 \right] \right| \\
&\leq \mathbb{E}  \left| (f_{\alpha}(X) - f_{\beta}(X))(2Y - f_{\alpha}(X) - f_{\beta}(X))  \right| \\
&\leq C_1 \|\alpha - \beta\| \cdot \mathbb{E}  \left| 2Y - f_{\alpha}(X) - f_{\beta}(X)  \right| \\
&\leq C_1 \|\alpha - \beta\| \cdot  \left (2 \mathbb{E}\left| Y - f_{\alpha^*}(X)   \right|  +  \mathbb{E}\left|  f_{\alpha}(X) + f_{\beta}(X) - 2 f_{\alpha^*}(X)\right| \right) \\
&\leq C_1 \|\alpha - \beta\|  \left (2\sigma + 4C_2\right).
\end{align*}
Similarly,
\begin{align*}
| R_n(\alpha) - R_n(\beta) | & \leq C_1 \| \alpha - \beta\| \left (4 C_2 + \frac{2}{n}\sum_{i=1}^n{| Y_i - f_{\alpha^*}(X_i) |} \right ) \\
&= C_1 \| \alpha - \beta \| \left ( 4C_2 + \frac{2}{n}\sum_{i=1}^n{| \epsilon_i |} \right ).
\end{align*}
Thus, for all $M_\delta > 4C_1C_2$,
\begin{align*}
&\h P \left[ |R_n(\alpha) - R_n(\beta)|  \le M_\delta \|\alpha -\beta\| ~ \forall \alpha, \beta \in \mc{W}\right]\\
&\le P \left (\frac{1}{n}\sum_{i=1}^n{| \epsilon_i |} \le \frac{M_{\delta}}{2C_1} - 2 C_2 \right )\\
&= 1 - P \left (\frac{1}{n}\sum_{i=1}^n{| \epsilon_i |} \ge \frac{M_{\delta}}{2C_1} - 2 C_2 \right ) \\
& \le 1 -  \frac{E | \epsilon_1 |}{\frac{M_{\delta}}{2C_1} - 2 C_2 }.
\end{align*}

\subsection{Proof of Lemma \ref{lem:generalization}}

The proof of this Lemma is similar to that of Lemma 4.2 in \cite{dinh2020consistent}. 
Since the network of our framework is fixed, a standard generalization bound (with constants depending on the dimension of the weight space $\mc{W}$) can be obtained.
For completeness, we include the proof of Lemma 4.2 in \cite{dinh2020consistent} below. 

Note that $n R_n( \alpha)/\sigma^2_e$ follows a  non-central chi-squared distribution with $n$ degrees of freedom and $f_\alpha(X)$ is  bounded.
By applying Theorem 7 in Zhang and Zhou (2018) \footnote{Zhang, Anru and Yuchen Zhou. "On the non-asymptotic and sharp lower tail bounds of random variables." arXiv preprint arXiv:1810.09006 (2018).}, we have
\begin{align*}
& \mathbb{P}\left[ | R_n( \alpha) - R(\alpha) | > t/2 \right] \\
& \leq  2\exp \left (- \frac{C_1 n^2 t^2}{n + 2 \sum_{i=1}^n{[f_\alpha(X) - f_{\alpha^*}(X)]^2} } \right ) \\
& \leq 2\exp(- C_2 n t^2),
\end{align*}
for all
\[
0 < t < \frac{n + \sum_{i=1}^n{[f_\alpha(X) - f_{\alpha^*}(X)]^2}}{n}.
\]
We define the events
\[
\mc{A}(\alpha, t) = \{|R_n( \alpha) - R(\alpha) | > t/2 \},
\]
\begin{align*}
\mc{B}(\alpha, t) = \{&\exists \alpha' \in \mc{W}~\text{such that}~\\
&\|\alpha'-\alpha\| \le \frac{t}{4M_\delta}~ \text{and}~ |R_n( \alpha') - R(\alpha') | > t \},
\end{align*}
and
\[
\mc{C} = \{ |R_n( \alpha) - R_n( \alpha')| \leq M_\delta \| \alpha - \alpha' \|, \forall \alpha, \alpha' \in \mc{W} \}.
\]
Here, $M_\delta$ is defined in Lemma \ref{lem:mck}.
By Lemma \ref{lem:mck}, $\mc{B}(\alpha, t) \cap \mc{C} \subset \mc{A}(\alpha, t)$ and $P(\mc{C}) \geq 1 - \delta$.

Let $m=dim(\mc{W})$, there exist $C_3(m) \ge 1$ and a finite set $\mathcal{H} \subset \mc{W}$ such that
\[
\mc{W} \subset \bigcup_{\alpha \in \mathcal{H}}{\mc{V}(\alpha, \epsilon)} \h \text{and}\h  |\mathcal{H}| \le C_3 /\epsilon^{m}
\]
where $\epsilon=t/(4M_\delta)$, $\mc{V}(\alpha, \epsilon)$ denotes the open ball centered at $\alpha$ with radius $\epsilon$, and $|\mathcal{H}|$ denotes the cardinality of $\mathcal{H}$.
By a union bound, we have
\[
\mathbb{P}\left[ \exists \alpha \in \mathcal{H}: \left |R_n( \alpha) - R(\alpha)\right |> t/2\right]  \le 2 \frac{C_3 (4M_\delta)^m}{t^m}e^{- C_2 n t^2}.
\]
Using the fact that $\mc{B}(\alpha, t) \cap \mc{C} \subset \mc{A}(\alpha, t), ~\forall \alpha \in \mathcal{H}$, we deduce
\[
\mathbb{P}\left[ \{ \exists \alpha  \in \mc{W}: \left |R_n( \alpha) - R(\alpha)\right |> t \right \} \cap \mc{C}]  \le C_4 t^{-m}e^{- C_2 n t^2}.
\]
Hence,
\[
\mathbb{P}\left[ \{ \exists \alpha  \in \mc{W}: \left |R_n( \alpha) - R(\alpha)\right |> t \right \}]  \le C_4 t^{-m}e^{- C_2 n t^2} + \delta.
\]

To complete the proof, we chose $t$ in such a way that $C_4 t^{-m}e^{- C_2 n t^2} ~ \le~ \delta$.
This can be done by choosing $t = \mathcal{O}(\log n /\sqrt{n})$.

\end{document}